%% file: main.tex
\documentclass[10pt,twocolumn,letterpaper]{article}

\usepackage{cvpr}

\definecolor{FHblue}{RGB}{0,92,184}
\definecolor{FHbg}{RGB}{245,248,255}
\newcommand{\kw}[1]{\textcolor{FHblue}{\textbf{#1}}}      
\newcommand{\hi}[1]{\colorbox{FHbg}{\strut #1}}   
\usepackage{amsmath,amssymb,amsthm}

\usepackage[table,svgnames]{xcolor}
\usepackage{booktabs,multirow,makecell,colortbl}

\usepackage{arydshln}  
\definecolor{Acol}{RGB}{70,130,180}      
\definecolor{ABcol}{RGB}{46,139,87}      
\definecolor{ABCcol}{RGB}{218,165,32}    
\newcommand{\A}[1]{\cellcolor{Acol!10}{#1}}
\newcommand{\AB}[1]{\cellcolor{ABcol!12}{#1}}

\usepackage[export]{adjustbox}
\usepackage{graphicx}
\usepackage{algorithm}
\usepackage{algpseudocode}

\newtheorem{theorem}{Theorem}[section]

\theoremstyle{definition}

\theoremstyle{remark}

\input{preamble}

\definecolor{cvprblue}{rgb}{0.21,0.49,0.74}
\usepackage[pagebackref,breaklinks,colorlinks,allcolors=cvprblue]{hyperref}

\def\paperID{15754} 
\def\confName{CVPR}
\def\confYear{2026}

\title{FedHarmony: Harmonizing Heterogeneous Label Correlations in \\Federated Multi-Label Learning}

\author{
Zhiqiang Kou\textsuperscript{1,2}\thanks{Equal contribution.} \quad
Junxiang Wu\textsuperscript{1,2}\footnotemark[1] \quad
Wenke Huang\textsuperscript{3} \quad
Wenwen He\textsuperscript{3} \quad
Ming-Kun Xie\textsuperscript{4} \\
Changwei Wang\textsuperscript{5} \quad
Yuheng Jia\textsuperscript{1,2}\thanks{Corresponding Author.} \quad
Di Jiang\textsuperscript{6} \quad
Yang Liu\textsuperscript{6} \quad
Xin Geng\textsuperscript{1,2 †} \quad
Qiang Yang\textsuperscript{6} \\
\textsuperscript{1}School of Computer Science and Engineering, Southeast University, Nanjing, China \\
\textsuperscript{2} Key Laboratory of New Generation Artificial Intelligence Technology and Its Interdisciplinary \\ Applications (Southeast University), Ministry of Education, China \\
\textsuperscript{3}Wuhan University, China \quad
\textsuperscript{4}RIKEN Center for Advanced Intelligence Project, Japan \\
\textsuperscript{5}Qilu University of Technology (Shandong Academy of Sciences), China \\
\textsuperscript{6}Academy for Artificial Intelligence, Hong Kong Polytechnic University, Hong Kong, China \\
{\tt\small yhjia@seu.edu.cn  \quad  xgeng@seu.edu.cn}
}

\begin{document}
\maketitle
\input{sec/0_abstract}    
\input{sec/1_intro}

\input{sec/2_method}
\input{sec/3_exp}

{
    \small
    \bibliographystyle{ieeenat_fullname}
    \bibliography{main}
}

\end{document}

%% file: preamble.tex
\usepackage{arydshln}



%% file: sec/0_abstract.tex
\begin{abstract}

Federated Multi-Label Learning is a distributed paradigm where multiple clients possess heterogeneous multi-label data and perform collaborative learning under privacy constraints without sharing raw data. However, modeling label correlations under heterogeneous distributions remains challenging. Due to client-specific label spaces and varying co-occurrence patterns, correlations learned by individual clients inevitably deviate from the global structure, a phenomenon we term \textit{label correlation drift}.
To address this, we propose \textit{FedHarmony}, a framework that harmonizes heterogeneous label correlations across clients. It introduces \textit{consensus correlation}, capturing agreement among other clients and serving as a global teacher to correct biased local estimates.
During aggregation, FedHarmony evaluates each client by both data size and correlation quality, assigning weights accordingly. Moreover, we develop an accelerated optimization algorithm for FedHarmony and theoretically establish faster convergence without sacrificing accuracy.  Experiments on real-world federated multi-label datasets show that FedHarmony consistently outperforms state-of-the-art methods.

\end{abstract}

%% file: sec/1_intro.tex
\section{Introduction}
\label{sec:intro}

Multi-Label Learning (MLL) \cite{zhang2013review, zhang2014lift} aims to predict multiple labels for each instance \cite{huang2012multi, kouijcaio, NEURIPS2022_751ef1e7,koubldl, kou2025nips}. 
For example, an image may contain \textit{building}, \textit{street}, and \textit{person}, and the goal is to identify all relevant labels for unseen samples. 
MLL has shown strong performance in various applications, including visual recognition \cite{cvintro}, sentiment analysis \cite{emointro}, and medical image understanding \cite{medicintro}. A central challenge in MLL is modeling label correlations, such as common co-occurrence patterns \cite{zhu2017multi, xiao2024dual}. 
Recent methods use Graph Convolutional Networks \cite{chen2019multi, zhu2017multi} and Transformer-based architectures \cite{xiecounterfactual, Yang_2023_ICCV} to explicitly encode these label relationships, which significantly improves prediction performance.

\begin{figure*}[!ht]
  \centering
  \includegraphics[width=0.8\textwidth]{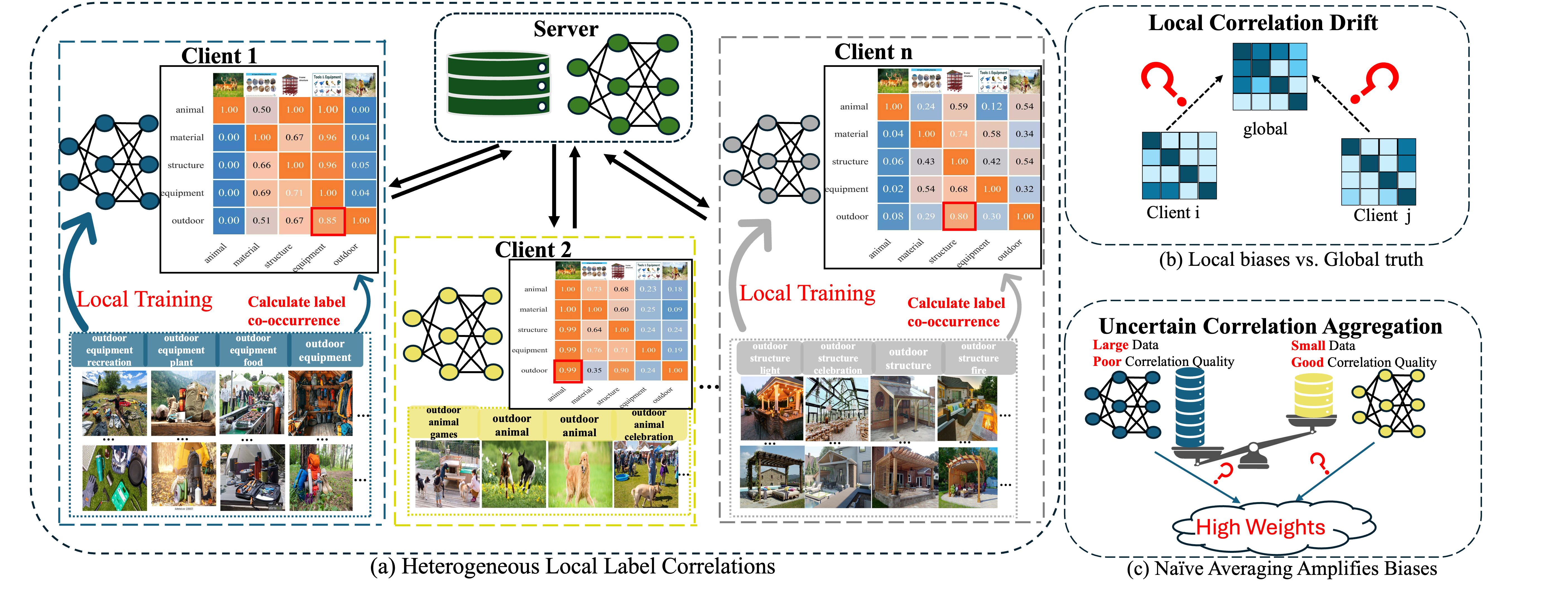}
 \caption{Label co-occurrence patterns on the FLAIR dataset~\cite{song2022flair}. 
Different clients exhibit distinct label co-occurrence frequencies, 
which deviate from the global pattern representing the true data distribution.}
  \label{dongji}
\end{figure*}

Driven by growing privacy and security concerns \cite{privacy,huangwen1, huangwen2, qi1}, multi-label learning is increasingly conducted under the Federated Learning framework \cite{qi2, meng2024federated,qi3}, as illustrated in Fig. \ref{dongji}(a), where each client holds its own multi-label dataset and Federated Multi-Label Learning (FedMLL) \cite{flmll1,guo2023privacy,flmll3} refers to a distributed paradigm in which multiple clients with multi-label data perform collaborative learning under privacy constraints without sharing raw data.
As discussed above,its key objective is to align and integrate the label correlations learned from distributed clients , enabling the central server to recover the overall label dependency structure  from decentralized data  \cite{recoverintro,recoverintro2, koutnnls, koutcsvt}.

However, achieving this goal is non-trivial   under heterogeneous data distributions \cite{fldatadistri,flconcept}.
As shown in Fig. \ref{dongji}(a), we visualize the label co-occurrence matrices on the FLAIR dataset\footnote{This is a multi-labeled federated learning dataset.} \cite{song2022flair}, showing that each client exhibits distinct co-occurrence frequencies. For example, the co-occurrence between the labels \textit{outdoor} and \textit{equipment} is very high in Client~1, while it is much lower in Client~2.
Moreover, existing methods \cite{liu2024fedlgt, mcmahan2017fedavg, li2020fedprox} weight clients by their training data size and average their updates accordingly, ignoring the quality of the learned label correlations.  As shown in Fig. \ref{dongji}(c), a client with a large amount of data may still learn poor correlation structures, yet it receives a disproportionately high aggregation weight, raising the question of whether this is desirable.

To address these challenges, we propose \textit{FedHarmony}, a federated multi-label learning framework that harmonizes heterogeneous label correlations across clients.
In FedMLL, each client observes only a subset of the full label space, which inevitably leads to biased local correlation estimates \cite{biasdata3.1}.
We argue that no single client can fully capture the true label relationships; however, the correlations that are consistent across most clients are more likely to reflect the underlying global semantics. Based on this insight, we introduce the notion of \textit{consensus correlation}, defined for each client as the collective agreement of all other clients. During local training, this consensus correlation serves as a global teacher: it provides structural guidance that continuously corrects the client’s biased local view, enabling its learned label relationships to gradually align with the global consensus.
In the aggregation stage, FedHarmony further considers the quality of each client’s learned label structure. To improve training efficiency, we also develop an accelerated optimization algorithm that speeds up convergence without compromising predictive performance.
Our main contributions are summarized as follows

\begin{itemize}[leftmargin=12pt]
    \item 
   We provide the first systematic study of label correlation drift in FedMLL,  and we propose FedHarmony to explicitly address this issue through consensus-guided correlation modeling.

    \item 
   We develop an accelerated optimization algorithm for FedHarmony and theoretically show that it achieves faster convergence while preserving model accuracy.

    \item 
   Extensive experiments on multiple federated multi-label benchmarks demonstrate that FedHarmony consistently outperforms existing state-of-the-art methods.
\end{itemize}

%% file: sec/2_method.tex
\section{FedHarmony}

$\textbf{Notation}.$  We consider a federated multi-label learning system with $K$ clients, indexed by $k$. 
Each client $k$ owns a private dataset $\mathcal{D}_k = \{x_i, y_i\}_{i=1}^{N_k}$, where $N_k$ denotes the private data number for $k$-th client. The input space is $\mathcal{X} \subset \mathbb{R}^d$ and the label space is $\mathcal{Y} = \{0,1\}^C$, where $C$ is the number of labels. For an instance $x \in \mathcal{X}$, its label vector is $y = (y_1,\dots,y_C)^\top \in \{0,1\}^C$, where $y_c = 1$ indicates that label $c$ is relevant to $x$. 
We denote the global model parameter at the beginning of the $t$-th communication round as $w^{t}$. 
The server then broadcasts $w^{t}$ to each client, initializing the local model as $w_k^{t} \leftarrow w^{t}$. 
Each client performs local optimization on its private dataset and updates the model via stochastic gradient descent: $w_k^{t} \leftarrow w_k^{t} - \eta \sum_{i \in \mathcal{B}_k} \ell\big(w_k^{t}, \xi_i^{(k)}\big)$, 
where $\mathcal{B}_k$ denotes a mini-batch sampled from the local dataset $\mathcal{D}_k$, 
$\xi$ is the query instance, 
and $\eta$ is the local learning rate.
After local updates, each client uploads its parameter $w_k^{t}$ to the server,
which performs weighted aggregation to obtain the next global model: $w^{t+1} = \sum_{k=1}^{K} \alpha_k\, w_k^{t},$ where $\alpha_k$ is the aggregation weight for client $k$. Following \cite{xie2023class}, we use the binary cross-entropy loss  for multi-label classification.

\noindent
\fcolorbox{FHblue}{FHbg}{%
  \parbox{\dimexpr\linewidth-2\fboxsep-2\fboxrule}{%
    \textbf{Overview of FedHarmony.}\;
    In federated multi-label learning, client-specific label spaces and heterogeneous co-occurrence patterns cause locally estimated label correlations to become inconsistent, leading to \kw{label correlation drift}. 
    FedHarmony aims to recover a \kw{consensus label structure} that captures shared global semantics while correcting biased local correlations through consensus-guided updates (Sec \ref{1} and Sec \ref{2}). \hi{Moreover}, clients differ in how well they learn these structures; thus, aggregation cannot rely solely on data size and must also account for \kw{\emph{learning quality}} to prevent unreliable correlations from dominating the global model (Sec \ref{3}).\vspace{2pt}
  }%
}

\subsection{Consensus Label Correlation Teacher}
\label{1}

In federated multi-label learning \cite{flmll1,guo2023privacy,flmll3}, data heterogeneity causes the locally learned label correlations to become biased \cite{biasdata3.1}. We argue that no single client knows the true label relationships, and that correlations shared by the majority are more likely to reflect the truth \cite{align3.1}. 
Thus, we introduce a \textit{consensus correlation}, representing the collective agreement of all clients except the target one. During client-side training, we use the consensus label correlation as a teacher to prevent the locally learned correlations from drifting away from the global structure.  Next, we define the consensus label correlation:

At communication round $t$, client $k$ obtains prediction scores for its local dataset 
$\mathcal{D}_k=\{x_i\}_{i=1}^{N_k}$ using its current model $f_k(\cdot;\theta_k^{(t)})$: $F_k^{(t)}=\big[f_k(x_1;\theta_k^{(t)}),\ldots,f_k(x_{N_k};\theta_k^{(t)})\big]^\top
\in[0,1]^{N_k\times C}$. These scores serve as soft estimates of label occurrence.  
From them, we compute the empirical marginal and joint probabilities:
\begin{equation}
\hat p_{k,c}^{(t)}
=\frac{1}{N_k}\sum_{i=1}^{N_k} F_{k,ic}^{(t)}, 
\qquad
\hat p_{k,cc'}^{(t)}
=\frac{1}{N_k}\sum_{i=1}^{N_k} F_{k,ic}^{(t)}F_{k,ic'}^{(t)} ,
\end{equation}
where $\hat p_{k,c}^{(t)}$ estimates the marginal probability that label $c$ is predicted to appear on client $k$, and $\hat p_{k,cc'}^{(t)}$ estimates the joint probability that labels $c$ and $c'$ are predicted to appear simultaneously. To capture their dependency strength, we compute a phi-style correlation coefficient:
\begin{equation}
R_{k,cc'}^{(t)}
=\frac{\hat p_{k,cc'}^{(t)}-\hat p_{k,c}^{(t)}\hat p_{k,c'}^{(t)}}
{\sqrt{\hat p_{k,c}^{(t)}(1-\hat p_{k,c}^{(t)})\,\hat p_{k,c'}^{(t)}(1-\hat p_{k,c'}^{(t)})}+\varepsilon},
\end{equation}
where $\varepsilon>0$ is a small constant for numerical stability.
For the $k$-th client, the local label correlation matrix is defined as $R_k^{(t)}=\big[R_{k,cc'}^{(t)}\big]_{c,c'=1}^{C}\in\mathbb{R}^{C\times C}$.

For client $k$ at round $t$, the consensus label correlation is defined as the leave-one-out
population consensus constructed from the \emph{label--label} correlation statistics of all other clients:
\begin{equation}
\label{eq:expert_consensus_arrow}
\underbrace{\{\mathbf{R}_j^{(t)}\}_{j\neq k}}_{\text{all clients except }k}
\;\xrightarrow{\ \mathcal{A}_t\ }\;
\mathbf{R}_{\mathrm{exp},k}^{*(t)} \in \mathcal{C}.
\end{equation}
Here, $\{\mathbf{R}_j^{(t)}\}_{j\neq k}$ are the uploaded correlation matrices from all clients except $k$,
and $\mathcal{A}_t(\cdot)$ denotes the consolidation operator that aggregates them into a
consensus structure.

\subsection{Consensus-Guided Correction}
\label{2}

We correct the local correlation by aligning it to the round-wise expert consensus. 
For client $i$ at round $t$, let $\mathbf{R}_i^{(t)}$ be its local label--label correlation 
and $\mathbf{R}_{\mathrm{exp},i}^{*(t)}$ the expert consensus (Sec.~\S2.1). 
The correction is imposed via a correlation-alignment loss
\begin{equation}
\label{eq:align_full}
\mathcal{L}^{\mathrm{align}}_{i,t}
\;=\;
\lambda\,\Psi\!\big(\mathbf{R}_i^{(t)},\,\mathbf{R}_{\mathrm{exp},i}^{*(t)}\big),
\end{equation}
where $\Psi(\cdot,\cdot)$ is a fixed distance/divergence in correlation space (kept consistent throughout).

\begin{figure}[!h]
  \centering
  \includegraphics[width=1\linewidth]{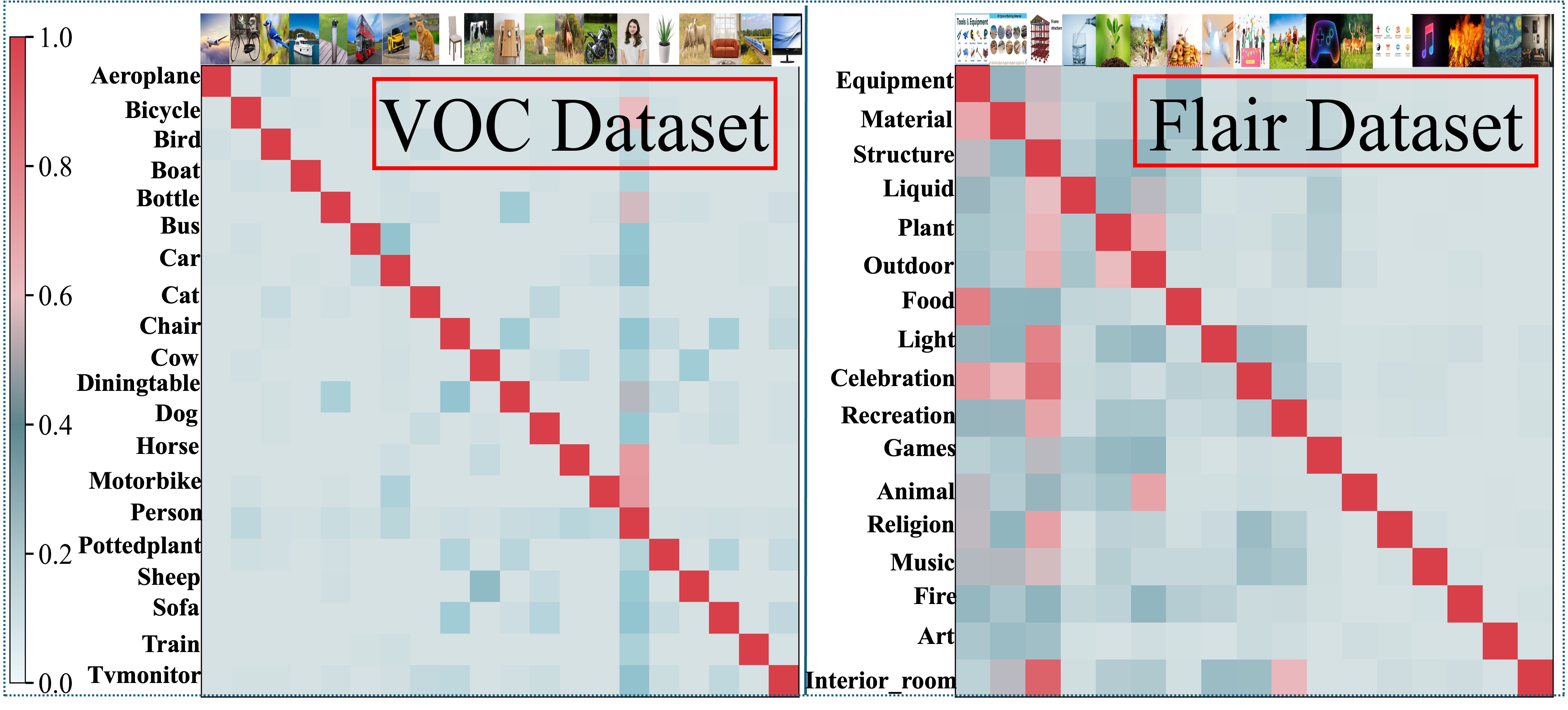}
   \caption{Label correlation matrix on the VOC and FLAIR datasets.
Each matrix visualizes the empirical co-occurrence probability between every pair of labels. Labels listed on the horizontal and vertical axes correspond to the semantic categories in each dataset, and brighter colors indicate higher co-occurrence probability.   }
  \label{xishu}
\end{figure}

However, not all labels are mutually correlated: each label interacts with only a small subset of others.
As shown in Fig.~\ref{xishu}, the label correlation matrix  is \emph{sparse} and approximately \emph{block-structured} \cite{block-structured}.
If we partition it into $g$ smaller (approximately) rank-$r$ submatrices and align correlations only \emph{within} clusters \cite{cluster3.2}, we can both improve optimization efficiency and incur negligible information loss.
We now provide theory from these two perspectives and state two theorems with their proofs.

Let $\mathbf{R}\in\mathbb{S}^C$ be a client correlation and $\mathbf{R}_{\mathrm{exp}}^{*}$ the (round-wise) expert consensus.
Define the weighted Frobenius loss $f(\mathbf{R})=\|\mathbf{\Gamma}\!\circ\!(\mathbf{R}-\mathbf{R}_{\mathrm{exp}}^{*})\|_F^2$ with weights $\mathbf{\Gamma}=(\gamma_{cc'})\ge 0$.
Let $\{\mathcal{S}_g\}_{g=1}^G$ be a label partition, and let
$\mathcal{U}=\{\mathbf{X}\in\mathbb{S}^C:\mathrm{supp}(\mathbf{X})\subseteq \cup_g \mathcal{S}_g^2\}$ denote the block-diagonal subspace,
with orthogonal projector $P_{\mathcal{U}}$.
Define
\begin{equation}
\left\{
\begin{aligned}
\gamma_{\mathrm{in}} \; &:=\; \min_{(c,c')\in \cup_g \mathcal{S}_g^2} \gamma_{cc'} \;>\; 0,\\
\gamma_{\mathrm{out}} \; &:=\; \max_{(c,c')\notin \cup_g \mathcal{S}_g^2} \gamma_{cc'} \;\in\; [0,\rho\, \gamma_{\mathrm{in}}),\ \ \rho<1,
\end{aligned}
\right.
\end{equation}
here, $\gamma_{\mathrm{in}}$ and $\gamma_{\mathrm{out}}$ denote the in-cluster and
cross-cluster correlation weights, respectively. Decompose the consensus as 
\begin{equation}
\left\{
\begin{aligned}
\mathbf{B} &:= P_{\mathcal{U}}\,\mathbf{R}_{\mathrm{exp}}^{*},\\
\mathbf{E} &:= \mathbf{R}_{\mathrm{exp}}^{*}-\mathbf{B},
\end{aligned}
\right.
\end{equation}
here, $\mathbf{B}$ denotes the in-cluster part of $\mathbf{R}_{\mathrm{exp}}^{*}$, while
$\mathbf{E}$ contains all cross-cluster correlations.

\begin{theorem}
\label{thm1}
Consider gradient descent with stepsize $\eta\le 1/(2\max_{c,c'} \gamma_{cc'}^{2})$ on
(i) the full objective $f_{\mathrm{full}}(\mathbf{R})=\|\mathbf{\Gamma}\!\circ\!(\mathbf{R}-\mathbf{R}_{\mathrm{exp}}^{*})\|_F^2$ and
(ii) the block-restricted objective $f_{\mathrm{blk}}(\mathbf{R})=\|\mathbf{\Gamma}\!\circ\!(P_{\mathcal{U}}\mathbf{R}-\mathbf{B})\|_F^2$.
From the same initialization $\mathbf{R}^{(0)}$, after $T$ steps,
\begin{equation}
\left\{
\begin{aligned}
f_{\mathrm{blk}}\!\big(\mathbf{R}_{\mathrm{blk}}^{(T)}\big) &\le (1-2\eta\, \gamma_{\mathrm{in}}^{2})^{T}\, f_{\mathrm{blk}}\!\big(\mathbf{R}^{(0)}\big),\\
f_{\mathrm{full}}\!\big(\mathbf{R}_{\mathrm{full}}^{(T)}\big) &\le (1-2\eta\, \gamma_{\mathrm{out}}^{2})^{T}\, f_{\mathrm{full}}\!\big(\mathbf{R}^{(0)}\big).
\end{aligned}
\right.
\end{equation}
Since $\gamma_{\mathrm{in}}\gg \gamma_{\mathrm{out}}$, block-wise alignment enjoys a strictly faster linear convergence rate.
\end{theorem}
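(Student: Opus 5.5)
Both objectives are separable, entrywise quadratics, so the plan is to run the gradient-descent recursion coordinate by coordinate. I will track each squared residual exactly and then bound the per-entry contraction factor uniformly over the entries that carry mass.

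\textbf{Coordinatewise recursion.} For $f_{\mathrm{full}}$, write $D=\mathbf{R}-\mathbf{R}_{\mathrm{exp}}^{*}$. Then $\nabla f_{\mathrm{full}}(\mathbf{R})=2\,\mathbf{\Gamma}\circ\mathbf{\Gamma}\circ D$, so one step gives
\begin{equation*}
D_{cc'}^{(t+1)}=(1-2\eta\gamma_{cc'}^{2})\,D_{cc'}^{(t)} .
\end{equation*}
The stepsize bound $\eta\le 1/(2\max\gamma_{cc'}^2)$ makes every factor lie in $[0,1]$.

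For $f_{\mathrm{blk}}$, the gradient is $2P_{\mathcal{U}}(\mathbf{\Gamma}\circ\mathbf{\Gamma}\circ(P_{\mathcal{U}}\mathbf{R}-\mathbf{B}))$. This vanishes off the blocks, and on the blocks it reproduces the same scalar recursion for $P_{\mathcal{U}}\mathbf{R}-\mathbf{B}$. The off-block entries of $\mathbf{R}$ never move and never enter $f_{\mathrm{blk}}$.

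\textbf{Summing with the weights.} The objective is $f=\sum\gamma_{cc'}^2D_{cc'}^2$, so after $T$ steps
\begin{equation*}
f^{(T)}=\sum\gamma_{cc'}^2(1-2\eta\gamma_{cc'}^2)^{2T}D_{cc'}^{(0)2}\le \max_{(c,c')}(1-2\eta\gamma_{cc'}^2)^{T}\,f^{(0)}.
\end{equation*}
Here I use $(1-a)^{2T}\le(1-a)^T$ for $a\in[0,1]$, and the maximum runs over the entries entering the sum. For $f_{\mathrm{blk}}$, those entries are in-block, where $\gamma_{cc'}\ge\gamma_{\mathrm{in}}$; this gives the factor $(1-2\eta\gamma_{\mathrm{in}}^2)^T$. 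For $f_{\mathrm{full}}$, in-block entries also contract at least as fast as $1-2\eta\gamma_{\mathrm{out}}^2$, because $\gamma_{\mathrm{in}}>\gamma_{\mathrm{out}}$. Off-block entries satisfy $\gamma_{cc'}\le\gamma_{\mathrm{out}}$.

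\textbf{Main obstacle.} For off-block entries, a small weight $\gamma_{cc'}$ gives a factor close to $1$, which is \emph{slower} than $1-2\eta\gamma_{\mathrm{out}}^2$, not faster. In the extreme $\gamma_{cc'}=0$, the entry has zero weight and so contributes nothing to $f$. But for $0<\gamma_{cc'}<\gamma_{\mathrm{out}}$, the full-objective rate is really $1-2\eta\min_{\gamma_{cc'}>0}\gamma_{cc'}^2$, and this is bounded above by $1-2\eta\gamma_{\mathrm{out}}^2$ only when every positive off-block weight equals $\gamma_{\mathrm{out}}$. I would therefore interpret $\gamma_{\mathrm{out}}$ as the effective off-block weight governing the slowest mode, or equivalently assume off-block weights take values in $\{0,\gamma_{\mathrm{out}}\}$.

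Even so, the inequality as written does not follow for general weights. The claim that holds in general is that the worst-case contraction of the full objective is at least as slow as $1-2\eta\gamma_{\mathrm{out}}^2$. This is witnessed by taking $D^{(0)}$ supported on an off-block entry attaining $\gamma_{\mathrm{out}}$, where the rate is exactly $(1-2\eta\gamma_{\mathrm{out}}^2)^{2T}$.

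\textbf{Comparison.} I would close by comparing the two rates. The guaranteed block rate uses $\gamma_{\mathrm{in}}$, while the full objective is limited by the off-block mode with weight $\gamma_{\mathrm{out}}$. Since $\gamma_{\mathrm{out}}<\rho\gamma_{\mathrm{in}}$ with $\rho<1$, we get $1-2\eta\gamma_{\mathrm{in}}^2<1-2\eta\gamma_{\mathrm{out}}^2$. That is the sense in which block-wise alignment converges strictly faster.
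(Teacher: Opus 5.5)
Your treatment of the block objective is correct, and it is the paper's argument made explicit. The paper observes that both Hessians are diagonal, takes $L=2\max\gamma_{cc'}^2$ and $\mu_{\mathrm{blk}}=2\gamma_{\mathrm{in}}^2$, and quotes the standard rate $(1-\eta\mu)^T$. The value of $\mu_{\mathrm{blk}}$ is implicitly taken on $\mathcal{U}$, which you justify by noting that off-block entries neither move nor enter $f_{\mathrm{blk}}$. Your coordinatewise recursion writes the same fact out, and it even gives the sharper factor $(1-2\eta\gamma_{\mathrm{in}}^2)^{2T}$.

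Your objection to the full-objective inequality is also correct. It exposes a flaw in the paper's proof, not a gap in yours.

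\textbf{Where the paper's proof fails.} The paper writes $\mu_{\mathrm{full}}=2\min\gamma_{cc'}^2\le 2\gamma_{\mathrm{out}}^2$ and then applies $(1-\eta\mu_{\mathrm{full}})^T$. But an upper bound on $\mu$ is a lower bound on the contraction factor: it gives $1-\eta\mu_{\mathrm{full}}\ge 1-2\eta\gamma_{\mathrm{out}}^2$. So the argument shows only that the guaranteed full-objective factor is no better than $1-2\eta\gamma_{\mathrm{out}}^2$, which is the content of your ``at least as slow'' remark. It yields no upper bound of the stated form. It also ignores that $\mu_{\mathrm{full}}=0$ whenever some weight vanishes; your restriction to the support handles that case.

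\textbf{A concrete counterexample.} Take three singleton clusters with:
\begin{itemize}
\item in-block weights $\gamma_{cc}=1$;
\item off-block weights $\gamma_{12}=\gamma_{21}=\tfrac12=\gamma_{\mathrm{out}}$ (with $\rho=0.6$) and all others $0.1$;
\item stepsize $\eta=\tfrac12$;
\item $\mathbf{R}^{(0)}-\mathbf{R}^{*}_{\mathrm{exp}}$ supported on $(1,3),(3,1)$.
\end{itemize}
Then $f_{\mathrm{full}}(\mathbf{R}^{(T)}_{\mathrm{full}})=0.9801^{T}f_{\mathrm{full}}(\mathbf{R}^{(0)})$. This exceeds the claimed $(3/4)^T f_{\mathrm{full}}(\mathbf{R}^{(0)})$ for every $T\ge 1$.

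\textbf{Two small refinements.}
\begin{enumerate}
\item Since $f$ contracts entrywise by $(1-2\eta\gamma_{cc'}^2)^2$ per step, the exact condition for the stated full bound to hold for all initializations is $(1-2\eta\gamma_{cc'}^2)^2\le 1-2\eta\gamma_{\mathrm{out}}^2$ for every positive weight. Your hypothesis that positive off-block weights equal $\gamma_{\mathrm{out}}$ is therefore sufficient but not necessary. Your ``only when'' characterizes your weakened rate, not the inequality itself.
\item In the final comparison, keep the exponents matched. Your witness makes the full objective decay exactly like $(1-2\eta\gamma_{\mathrm{out}}^2)^{2T}$. Compare it with the sharp block bound $(1-2\eta\gamma_{\mathrm{in}}^2)^{2T}$, because the weakened $(1-2\eta\gamma_{\mathrm{in}}^2)^{T}$ need not be smaller.
\end{enumerate}
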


\begin{proof}
Vectorize under the Frobenius inner product. Both objectives are quadratic with diagonal Hessians:
$H_{\text{full}}=2\,\mathrm{diag}(\gamma_{cc'}^2)$ and
$H_{\text{blk}}=2\,\mathrm{diag}(\gamma_{cc'}^2\mathbf{1}_{(c,c')\in \cup_g \mathcal{S}_g^2})$.
Thus the strong convexity constants satisfy
$\mu_{\text{full}}=2\min \gamma_{cc'}^2\le 2\gamma_{\mathrm{out}}^2$ and
$\mu_{\text{blk}}=2\min_{\text{in}} \gamma_{cc'}^2\ge 2\gamma_{\mathrm{in}}^2$.
With $\eta\le 1/L$, standard results for strongly convex quadratics yield the linear factors $(1-\eta\mu)^T$.
\end{proof}

\begin{theorem}[]
\label{thm2}
Let $\mathbf{R}_{\mathrm{blk}}^{(T)}$ be the iterate produced by minimizing $f_{\mathrm{blk}}$.
Then
\[
f_{\mathrm{full}}\!\big(\mathbf{R}_{\mathrm{blk}}^{(T)}\big)
=
\underbrace{f_{\mathrm{blk}}\!\big(\mathbf{R}_{\mathrm{blk}}^{(T)}\big)}_{\text{in-block alignment error}}
+
\underbrace{\|\,\mathbf{\Gamma}_{\mathrm{out}}\!\circ \mathbf{E}\,\|_F^{2}}_{\text{upper bound}},
\]
where $\mathbf{\Gamma}_{\mathrm{out}}$ keeps only cross-block weights.
In particular, if $\mathbf{E}=0$ or $\gamma_{\mathrm{out}}$ is small, the second term is at most $O(\gamma_{\mathrm{out}}^{2}\|\mathbf{E}\|_F^{2})$ and negligible.
\end{theorem}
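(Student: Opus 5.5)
The plan is to expand both objectives entrywise and split the index set $[C]^2$ into the in-block part $\mathcal{I}:=\cup_g\mathcal{S}_g^2$ and its complement $\mathcal{I}^c$. Because $P_{\mathcal{U}}$ is the orthogonal projector onto matrices supported on $\mathcal{I}$, it acts entrywise as a mask: $(P_{\mathcal{U}}\mathbf{X})_{cc'}=X_{cc'}\mathbf{1}_{(c,c')\in\mathcal{I}}$. Consequently $\mathbf{B}$ equals $\mathbf{R}^{*}_{\mathrm{exp}}$ on $\mathcal{I}$ and vanishes on $\mathcal{I}^c$, while $\mathbf{E}$ vanishes on $\mathcal{I}$ and equals $\mathbf{R}^{*}_{\mathrm{exp}}$ on $\mathcal{I}^c$. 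For any $\mathbf{R}$ this gives
\[
f_{\mathrm{full}}(\mathbf{R})=\sum_{(c,c')\in\mathcal{I}}\gamma_{cc'}^2\,(R_{cc'}-B_{cc'})^2+\sum_{(c,c')\notin\mathcal{I}}\gamma_{cc'}^2\,(R_{cc'}-E_{cc'})^2 .
\]
The first sum is exactly $f_{\mathrm{blk}}(\mathbf{R})$. Indeed, $f_{\mathrm{blk}}$ has no contribution on $\mathcal{I}^c$, since there $(P_{\mathcal{U}}\mathbf{R})_{cc'}=0=B_{cc'}$.

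The second step is to control the off-block entries of the iterate $\mathbf{R}^{(T)}_{\mathrm{blk}}$. The gradient $\nabla f_{\mathrm{blk}}(\mathbf{R})=2\,P_{\mathcal{U}}\big(\mathbf{\Gamma}^{\circ 2}\circ(P_{\mathcal{U}}\mathbf{R}-\mathbf{B})\big)$ is supported on $\mathcal{I}$. This is the same diagonal-Hessian structure used in the proof of Theorem~\ref{thm1}. Hence gradient descent on $f_{\mathrm{blk}}$ never moves the off-block coordinates, and $(\mathbf{R}^{(T)}_{\mathrm{blk}})_{\mathcal{I}^c}=(\mathbf{R}^{(0)})_{\mathcal{I}^c}$. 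The block-restricted scheme is naturally initialized in (or read out through) $\mathcal{U}$, i.e.\ $P_{\mathcal{U}}\mathbf{R}^{(0)}=\mathbf{R}^{(0)}$, and under that convention these coordinates are zero. The second sum then becomes $\sum_{(c,c')\notin\mathcal{I}}\gamma_{cc'}^2E_{cc'}^2=\|\mathbf{\Gamma}_{\mathrm{out}}\circ\mathbf{E}\|_F^2$, where $\mathbf{\Gamma}_{\mathrm{out}}:=\mathbf{\Gamma}\circ\mathbf{1}_{\mathcal{I}^c}$. This yields the claimed identity.

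The final bound follows from $\gamma_{cc'}\le\gamma_{\mathrm{out}}$ on $\mathcal{I}^c$, which gives $\|\mathbf{\Gamma}_{\mathrm{out}}\circ\mathbf{E}\|_F^2\le\gamma_{\mathrm{out}}^2\|\mathbf{E}\|_F^2$. This term is zero when $\mathbf{E}=0$. Combined with Theorem~\ref{thm1}, the full loss of the block iterate decays as $(1-2\eta\gamma_{\mathrm{in}}^2)^T f_{\mathrm{blk}}(\mathbf{R}^{(0)})$ plus this fixed floor.

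The main obstacle is the initialization issue in the second step. Without the assumption $P_{\mathcal{U}}\mathbf{R}^{(0)}=\mathbf{R}^{(0)}$, the exact equality fails. The correct term is then $\|\mathbf{\Gamma}_{\mathrm{out}}\circ(\mathbf{R}^{(0)}-\mathbf{E})\|_F^2$, which is bounded by $2\gamma_{\mathrm{out}}^2(\|\mathbf{R}^{(0)}_{\mathcal{I}^c}\|_F^2+\|\mathbf{E}\|_F^2)$. I would state the identity under the in-subspace initialization convention, or equivalently for the projected output $P_{\mathcal{U}}\mathbf{R}^{(T)}_{\mathrm{blk}}$. I would then add this remark to cover general initializations, so that the $O(\gamma_{\mathrm{out}}^2)$ conclusion holds in either case.
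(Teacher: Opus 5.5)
Your proposal is correct and follows the paper's argument. The paper splits $f_{\mathrm{full}}$ using the disjoint, hence Frobenius-orthogonal, supports of $\mathbf{\Gamma}\circ(\mathbf{R}_{\mathrm{blk}}^{(T)}-\mathbf{B})$ and $\mathbf{\Gamma}\circ\mathbf{E}$, which is exactly your entrywise split over $\mathcal{I}$ and $\mathcal{I}^c$. The paper simply asserts $\mathbf{R}_{\mathrm{blk}}^{(T)}\in\mathcal{U}$, even though Theorem~\ref{thm1} allows an arbitrary common $\mathbf{R}^{(0)}$, so your gradient-support argument and explicit in-subspace initialization assumption (with the correction term for general $\mathbf{R}^{(0)}$) supply a justification the paper leaves implicit.
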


\begin{proof}
Since $\mathbf{R}_{\mathrm{blk}}^{(T)}\in\mathcal{U}$ and $\mathbf{B}=P_{\mathcal{U}}\mathbf{R}^*$, the supports of
$\mathbf{\Gamma}\!\circ(\mathbf{R}_{\mathrm{blk}}^{(T)}-\mathbf{B})$ (in-block) and $\mathbf{\Gamma}\!\circ \mathbf{E}$ (cross-block) are disjoint and orthogonal under the Frobenius inner product.
Hence
$\|\mathbf{\Gamma}\!\circ(\mathbf{R}_{\mathrm{blk}}^{(T)}-\mathbf{R}^*)\|_F^2
=\|\mathbf{\Gamma}\!\circ(\mathbf{R}_{\mathrm{blk}}^{(T)}-\mathbf{B})\|_F^2+\|\mathbf{\Gamma}\!\circ \mathbf{E}\|_F^2$,
and the second term equals $\|\mathbf{\Gamma}_{\mathrm{out}}\!\circ \mathbf{E}\|_F^2$.
\end{proof}

By Theorem~\ref{thm1}, restricting correction to in-cluster pairs increases curvature
(from $\gamma_{\mathrm{out}}$ to $\gamma_{\mathrm{in}}$) and yields a faster linear rate.
By Theorem~\ref{thm2}, ignoring cross-cluster entries incurs at most
$\|\mathbf{\Gamma}_{\mathrm{out}}\!\circ \mathbf{E}\|_F^{2}$ additional loss, i.e., negligible when the consensus is
near block-diagonal or cross-cluster weights are small. Motivated by Theorems~\ref{thm1} and~\ref{thm2}, we correct only the
\emph{high-correlation} subset of label pairs in the full label space. Concretely, we partition
labels using the expert correlation $R_{\mathrm{exp},i}^{*(t)}$ to obtain $g$ clusters
$\{\mathcal{S}_g\}_{g=1}^{G}$\footnote{See Algorithm~\ref{alg:spectral} for clustering details.},
so that strongly correlated labels fall into the same group and alignment focuses on dense,
high-signal subspaces. Given $\{\mathcal{S}_g\}_{g=1}^{G}$, Eq.~(\ref{eq:align_full})
is replaced by the within-cluster objective

\begin{equation}
\label{eq:align_blockwise}
\mathcal{L}^{\mathrm{align}}_{i,t}
=\lambda \sum_{g=1}^{G}
\Psi\!\Big(R_i^{(t)}[\mathcal{S}_g,\mathcal{S}_g],\,
          R_{\mathrm{exp},i}^{*(t)}[\mathcal{S}_g,\mathcal{S}_g]\Big).
\end{equation}

\begin{algorithm}[!t]
\caption{Spectral clustering on expert correlation}
\label{alg:spectral}
\begin{algorithmic}[1]
\Require Expert correlation $\mathbf{R}_{\mathrm{exp}}^{*(t)}\in\mathbb{R}^{C\times C}$, number of clusters $G$
\State \textbf{Affinity:} $\mathbf{S} \gets \frac{1}{2}\big(|\mathbf{R}_{\mathrm{exp}}^{*(t)}|+|\mathbf{R}_{\mathrm{exp}}^{*(t)}|^{\top}\big)$; set $\mathrm{diag}(\mathbf{S})\leftarrow 0$
\State \textbf{Laplacian:} $\mathbf{D}\gets \mathrm{diag}(\mathbf{S}\mathbf{1})$, \; $\mathbf{L}\gets \mathbf{D}^{-1/2}(\mathbf{D}-\mathbf{S})\mathbf{D}^{-1/2}$
\State \textbf{Embedding:} take $\mathbf{U}\in\mathbb{R}^{C\times G}$ as eigenvectors of $\mathbf{L}$ for the $G$ smallest eigenvalues
\State \textbf{Row normalization:} $\mathbf{U}_{c,:}\leftarrow \mathbf{U}_{c,:}/\|\mathbf{U}_{c,:}\|_{2}$ for all $c$
\State \textbf{Clustering:} run $k$-means on $\{\mathbf{U}_{c,:}\}_{c=1}^{C}$ to obtain clusters $\{\mathcal{S}_g\}_{g=1}^{G}$
\State \Return $\{\mathcal{S}_g\}_{g=1}^{G}$
\end{algorithmic}
\end{algorithm}

\subsection{Correlation-Aware Aggregation}
\label{3}

We aggregate client updates by jointly considering (i) data quantity and (ii) the \emph{learning quality} of each client's label-structure modeling. Early rounds emphasize quantity since local correlations are unreliable; later rounds increasingly favor structural quality.  For client $i$ at round $t$, define its block-wise structural discrepancy
\begin{equation}
s_i^{(t)} \;=\; \sum_{g=1}^{G}
\Psi\!\Big(\mathbf{R}_i^{(t)}[\mathcal{S}_g,\mathcal{S}_g],\,
          \mathbf{R}_{\mathrm{exp},i}^{*(t)}[\mathcal{S}_g,\mathcal{S}_g]\Big),
\end{equation}
and map it to a quality score (larger is better) via a monotone decreasing transform, e.g.
$q_i^{(t)} \;=\; \exp\!\big(-\gamma\, s_i^{(t)}\big), \qquad \gamma>0$.
Normalize counts and qualities: $\bar n_i \;=\; \frac{n_i}{\sum_j n_j}$, $\bar q_i^{(t)} \;=\; \frac{q_i^{(t)}}{\sum_j q_j^{(t)}}$.
Let $\alpha^{(t)}\!\in[0,1]$ decrease over rounds to shift the aggregation from quantity-driven to structure-driven, e.g.
$\alpha^{(t)} \;=\; \max\!\big(0,\, 1 - t/T_0\big)$, 
with a user-chosen transition horizon $T_0$. The per-round aggregation weight is
\begin{equation}
w_i^{(t)} \;=\; \alpha^{(t)}\,\bar n_i \;+\; \big(1-\alpha^{(t)}\big)\,\bar q_i^{(t)},
\qquad
\sum_i w_i^{(t)}=1.
\end{equation}
The server aggregates model parameters as
$\boldsymbol{\theta}^{(t+1)} \;=\; \sum_{i} w_i^{(t)}\,\boldsymbol{\theta}_i^{(t+1)}$.
When $t$ is small, $\alpha^{(t)}\!\approx\!1$ and the rule reduces to quantity-weighted averaging (FedAvg-like), which stabilizes early training.
As $t$ grows, $\alpha^{(t)}\!\downarrow\!0$ and clients with better correlation alignment (larger $q_i^{(t)}$) dominate aggregation.

%% file: sec/3_exp.tex
\section{Experiment}

\subsection{Experimental Setup}

\newcommand{\ourrow}{\rowcolor{RoyalBlue!8}}
\newcommand{\ours}{\textbf{\textsc{FedHarmony}}} 
\newcommand{\best}[1]{\textbf{#1}}               

\begin{table*}[!th]
\centering
\caption{Overall comparisons on three multi-label benchmarks under non-IID settings. 
Rows for \ours\ are lightly shaded; bold numbers denote the best within each dataset block.}

\setlength{\tabcolsep}{3pt} 

\renewcommand{\arraystretch}{0.85} %

\begin{tabular}{l l||cc|ccc|ccc} 

\hline\hline

\rowcolor{gray!25} 
&Method & mAP (\%) & O\_mAP (\%) & CP (\%) & CR (\%) & CF1 (\%) & OP (\%) & OR (\%) & OF1 (\%) \\
\hline
\multirow{8}{*}{\textcolor{gray!70}{\textit{FLAIR}}} & FedAvg      & 35.4 & 70.6 & 34.1 & 27.9 & 30.7 & 70.2 & 56.0 & 62.3 \\
 &FedCurv     & 35.4 & 71.6 & 34.9 & 25.5 & 29.5 & 72.7 & 53.2 & 61.4 \\
 &FedProx     & 39.6 & 75.7 & 40.2 & 30.0 & 34.3 & 75.7 & 58.2 & 65.8 \\
 &FedNova     & 30.3 & 55.6 & 12.3 & 11.0 & 11.6 & 56.5 & 35.9 & 43.9 \\
 &FedLGT      & 36.9 & 72.8 & 39.2 & 25.0 & 30.5 & 75.5 & 51.4 & 61.1 \\
 &SphereFed   & 35.2 & 71.9 & 35.1 & 22.5 & 27.5 & 77.2 & 48.3 & 59.4 \\
 &FedRDN      & 35.5 & 71.4 & 35.0 & 26.3 & 30.0 & 72.3 & 54.1 & 61.9 \\
 \hdashline
\ourrow 
 &\ours       & \best{51.0} & \best{84.0} & \best{49.3} & \best{43.6} & \best{46.3} & \best{79.0} & \best{71.5} & \best{75.1} \\

\hdashline
\multirow{8}{*}{\textcolor{gray!70}{\textit{COCO-80}}}& FedAvg      & 63.4 & 73.9 & 63.2 & 54.9 & 58.7 & 69.3 & 64.4 & 66.7 \\
 &FedCurv     & 63.3 & 73.3 & 64.0 & 53.7 & 58.4 & 68.6 & 64.4 & 66.4 \\

 &FedProx     & 64.0 & 74.1 & 62.7 & 56.2 & 59.3 & 68.5 & 65.8 & 67.1 \\
 &FedNova     &  4.3 &  4.9 &  2.5 & 49.0 &  4.8 &  4.2 & 57.8 &  7.9 \\
 &FedLGT      & 64.5 & 74.4 & 61.4 & 57.5 & 59.4 & 67.5 & 66.8 & 67.2 \\
 &SphereFed   & 63.0 & 72.7 & 61.8 & 55.2 & 58.3 & 65.9 & 65.3 & 65.6 \\
 &FedRDN      & 63.4 & 73.6 & 64.0 & 54.5 & 58.9 & 69.1 & 64.4 & 66.6 \\
 \hdashline
\ourrow
 &\ours       & \best{71.4} & \best{79.9} & \best{73.5} & \best{58.6} & \best{65.2} & \best{78.6} & \best{67.6} & \best{72.7} \\

\hdashline

 \multirow{8}{*}{\textcolor{gray!70}{\textit{VOC2007}}}&FedAvg      & 75.7 & 69.2 & 81.8 & 54.5 & 65.4 & 61.6 & 63.9 & 62.7 \\
 &FedCurv     & 72.8 & 56.9 & 75.5 & 50.7 & 60.7 & 62.4 & 57.5 & 59.9 \\
 &FedProx     & 75.5 & 68.9 & 82.6 & 53.2 & 64.7 & 67.1 & 61.4 & 64.1 \\
 &FedNova     & 18.1 & 36.6 &  4.7 &  1.9 &  2.7 & \best{94.9} & 11.0 & 19.7 \\
 &FedLGT      & 78.1 & 70.8 & 77.7 & 61.0 & 68.3 & 60.2 & 68.0 & 63.9 \\
 &SphereFed   & 75.2 & 59.4 & 81.5 & 56.8 & 66.9 & 58.3 & 65.8 & 61.8 \\
 &FedRDN      & 78.3 & 72.2 & 79.9 & 57.2 & 66.6 & 70.6 & 65.0 & 67.7 \\
 \hdashline
\ourrow
 &\ours       & \best{86.9} & \best{89.1} & \best{88.5} & \best{71.2} & \best{78.9} & 88.5 & \best{78.8} & \best{83.4} \\
\hline\hline
\end{tabular}
\label{duibishiyan} 
\end{table*}

\begin{figure*}[!h]
  \centering
  
  \includegraphics[width=0.88\textwidth]{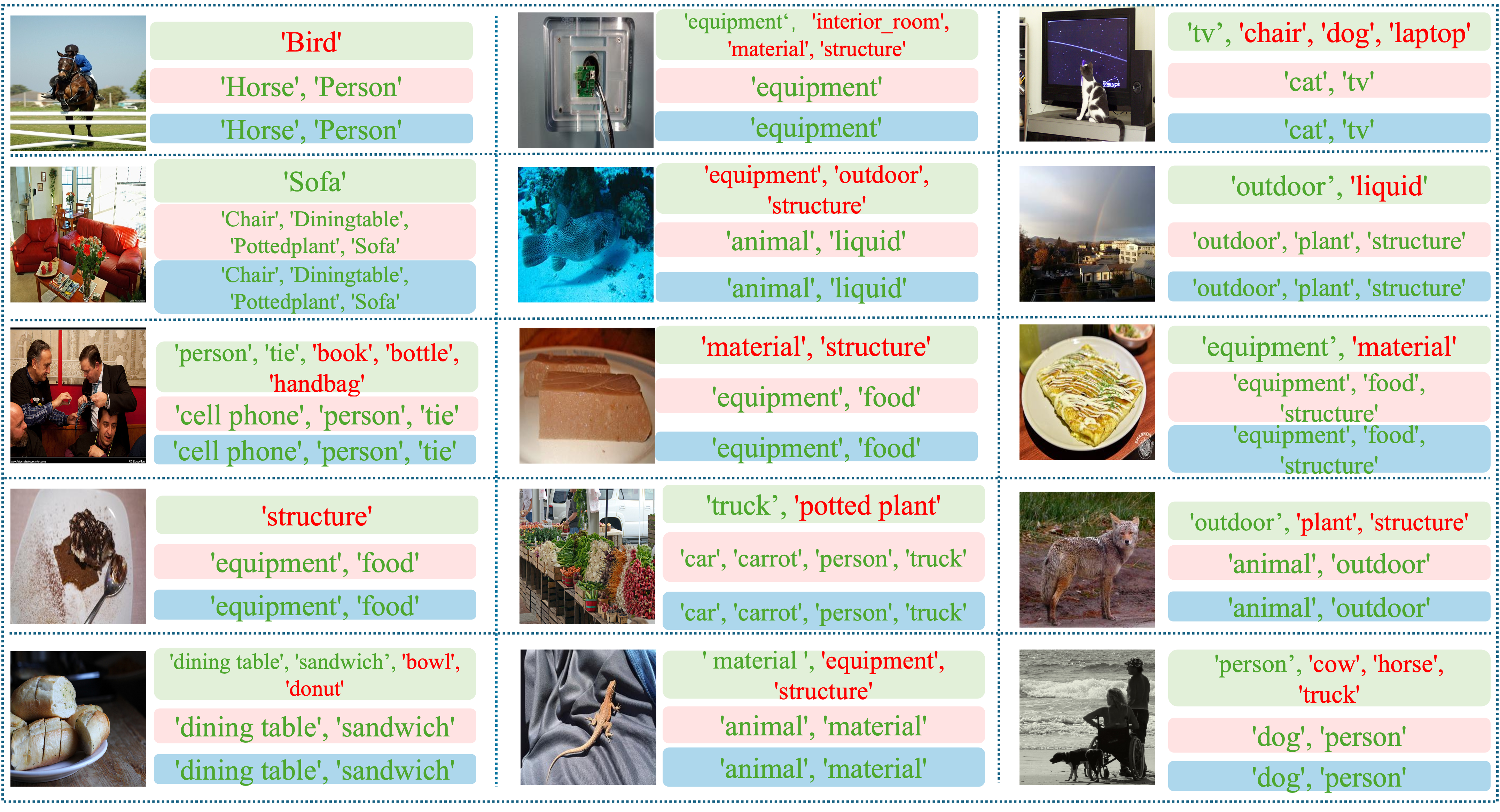}
  \caption{\textbf{Qualitative comparison.} For each image, we show the predictions of 
\textit{FedAvg} ({\color{LightGreen!25}\rule{6pt}{6pt}}) and 
\textit{Ours} ({\color{cyan!25}\rule{6pt}{6pt}}), alongside the 
ground-truth labels ({\color{pink!25}\rule{6pt}{6pt}}). 
Our method suppresses spurious labels (e.g., \textit{'bird'} on COCO horse) and recovers missing semantics (e.g., \textit{equipment}, \textit{material}, \textit{structure}), demonstrating better correlation-aware recognition across scenes.}
  \label{fig:qual_keshihua}
\end{figure*}

$\mathbf{Datasets.}$ Following \citet{liu2024fedlgt}, we evaluate on three multi-label datasets:
\emph{FLAIR}~\citep{song2022flair}, \emph{MS\mbox{-}COCO}~\citep{lin2014coco}, and \emph{PASCAL VOC}~\citep{everingham2015pascal}.
\emph{FLAIR} contains \emph{429{,}078} images from \emph{51{,}414} Flickr users with a two-level label space (\emph{17} coarse and \emph{1{,}628} fine labels), naturally exhibiting quantity and label-distribution skew.
\emph{MS\mbox{-}COCO} provides approximately \emph{330K} images annotated with \emph{80} object categories in the standard multi-label formulation.
\emph{PASCAL VOC} offers \emph{20} categories; we report results on \emph{VOC2007} (9{,}963 images) and \emph{VOC2012} train/val (11{,}540 images).  To convert MS-COCO and PASCAL VOC into heterogeneous federated datasets, we induce non-IIDness along three axes.  The full protocol (Dirichlet concentration, client counts, transform sets, and seeds) is detailed in the Appendix.

$\mathbf{Counterparts.}$
We compare FedHarmony with representative federated learning methods from three major paradigms:
(i) optimization-driven aggregation methods, including FedAvg~\citep{mcmahan2017fedavg}, FedProx~\citep{li2020fedprox}, and FedNova~\citep{wang2020fednova};
(ii) curvature- or geometry-aware optimization methods, such as FedCurv~\citep{shoham2019fedcurv} and SphereFed~\citep{dong2022spherefed};
and (iii) task- or feature-aware baselines, including FedLGT for federated multi-label classification~\citep{liu2024fedlgt} and FedRDN for mitigating distribution skew through feature-level augmentation~\citep{jin2025fedrdn}.
All methods are implemented under identical experimental settings, sharing the same backbone, optimizer, and communication budget to ensure a fair comparison.

$\mathbf{Implementation}$  $\mathbf{Details.}$
Unless otherwise stated, we follow the settings of \citet{liu2024fedlgt, huangwen3, huang4}. 
All methods use a ViT-B/16 backbone \cite{dosovitskiy2021vit} with a $C$-way sigmoid classification head. 
Local training runs for 5 epochs per round using Adam \cite{kingma2015adam} with a learning rate of $10^{-4}$ and batch size $16$, and we set the total communication rounds to $T=50$. 
To account for quantity skew (e.g., in FLAIR), we adopt non-uniform client sampling \cite{li2020fedprox}, where the probability of selecting client $i$ is proportional to its local data size. 
All methods share identical backbones, optimizers, schedules, and client partitions to ensure fairness. 
Experiments are implemented in PyTorch and trained on eight NVIDIA RTX~4090 GPUs. 
Additional details are provided in the Appendix.

 $\mathbf{Evaluation}$   $\mathbf{Metrics. }$We evaluate multi-label prediction performance using eight widely adopted metrics~\cite{zhu2017multi}. 
Specifically, we report mean Average Precision (mAP, ↑), overall mean Average Precision (O-mAP, ↑), class-wise precision (CP, ↑), class-wise recall (CR, ↑), class-wise F1 score (CF1, ↑), overall precision (OP, ↑), overall recall (OR, ↑), and overall F1 score (OF1, ↑).
Higher values indicate better performance for all metrics.

\subsection{Comprehensive Comparative Analysis}
Table~\ref{duibishiyan} and Fig.~\ref{fig:qual_keshihua} present both quantitative and qualitative comparisons under heterogeneous non-IID settings across COCO-80, VOC2007, and FLAIR. 
FedHarmony achieves the best performance on all datasets and metrics, with a notable gain of more than 10 mAP on FLAIR over the strongest baseline, and similarly strong margins on COCO-80 and VOC2007. 
Qualitative results further show that FedAvg often yields spurious or incomplete predictions, whereas FedHarmony produces more consistent and semantically accurate label sets.

From the results above, we obtain the following observations:
\begin{itemize}[leftmargin=1.2em]
    \item 
    FedHarmony substantially outperforms FedAvg and FedProx, indicating that aligning cross-client correlation structures is more effective than parameter averaging for multi-label representation learning.

    \item
    FedCurv and SphereFed show marginal improvements, suggesting that curvature correction is insufficient to address the semantic inconsistencies introduced by heterogeneous label dependencies.

    \item 
    Compared with FedLGT and FedRDN, FedHarmony remains stable under severe non-IID conditions, demonstrating that correlation-structure alignment provides a more robust and generalizable inductive bias.
\end{itemize}

Overall, these results verify that harmonizing client-specific label correlations leads to globally coherent and locally adaptive representations, yielding consistent performance gains across federated multi-label benchmarks.

\begin{table}[t]
\centering \small
\caption{Cumulative training time comparison on FLAIR and VOC2007 datasets across communication rounds. 
The proposed \textbf{Block-Optimized (B-OPT)} method notably accelerates convergence compared to the \textbf{No-Block-Optimized (NB-OPT)} variant.}
\label{tab:training_time}
\setlength{\tabcolsep}{9pt}
\renewcommand{\arraystretch}{1.1}
\begin{tabular}{c|cc|cc}
\hline\hline
\multirow{2}{*}{\textbf{Round}} & \multicolumn{2}{c|}{\textbf{Flair}} & \multicolumn{2}{c}{\textbf{VOC2007}} \\
\cline{2-3} \cline{4-5} 
 & \cellcolor{gray!25}\textbf{B-OPT} & \textbf{NB-OPT} & 
   \cellcolor{gray!25}\textbf{B-OPT} & \textbf{NB-OPT} \\
\hline
1  & \cellcolor{gray!15}03:59 & 05:14 & \cellcolor{gray!15}01:32 & 02:29 \\
2  & \cellcolor{gray!15}07:55 & 10:15 & \cellcolor{gray!15}02:36 & 04:03 \\
3  & \cellcolor{gray!15}12:05 & 15:43 & \cellcolor{gray!15}03:42 & 05:39 \\
4  & \cellcolor{gray!15}16:09 & 21:06 & \cellcolor{gray!15}04:49 & 07:17 \\
5  & \cellcolor{gray!15}20:14 & 27:14 & \cellcolor{gray!15}05:55 & 08:56 \\
6  & \cellcolor{gray!15}24:17 & 33:31 & \cellcolor{gray!15}07:05 & 10:34 \\
7  & \cellcolor{gray!15}28:18 & 39:53 & \cellcolor{gray!15}08:14 & 12:08 \\
8  & \cellcolor{gray!15}32:17 & 46:26 & \cellcolor{gray!15}09:29 & 13:51 \\
9  & \cellcolor{gray!15}36:21 & 51:58 & \cellcolor{gray!15}10:32 & 15:29 \\
10 & \cellcolor{gray!15}40:22 & 56:19 & \cellcolor{gray!15}11:43 & 17:09 \\
\hline\hline
\end{tabular}%
\end{table}

\begin{table}[!h] 
\centering 
\caption{Comparison of No-Block vs. Block-Optimized across three benchmarks. 
All metrics are reported in percentage (\%).}
\setlength{\tabcolsep}{3pt}
\begin{tabular}{l l||ccc} 
\hline\hline

\rowcolor{gray!25} 
 & Method & mAP $\uparrow$ & O\_mAP  $\uparrow$ & CP $\uparrow$ \\
\hline
\multirow{2}{*}{\textcolor{gray!70}{\textit{COCO-80}}}
& No-Block        & 71.2 & 79.8 & 73.8 \\ 
& Block-Optimized & 72.4 & 80.8 & 71.9 \\ 
\hdashline
\multirow{2}{*}{\textcolor{gray!70}{\textit{VOC2007}}}
& No-Block        & 86.1 & 88.9 & 87.0 \\ 
& Block-Optimized & 87.0 & 89.1 & 87.4 \\ 
\hdashline
\multirow{2}{*}{\textcolor{gray!70}{\textit{FLAIR}}}
& No-Block        & 47.0 & 81.9 & 45.6 \\ 
& Block-Optimized & 48.0 & 83.8 & 45.9 \\ 
\hline\hline
\end{tabular}
\label{block}
\end{table}

 \definecolor{rowgray}{RGB}{248,250,253}
 \definecolor{colhead}{RGB}{235,240,245}
 \definecolor{accent}{RGB}{40,85,150}

\begin{table}[!h]
\centering
\caption{\textbf{Wilcoxon signed-rank test results} between No-Block and Block-Optimized on three benchmarks. 
All $p$-values $>$ 0.05, indicating no significant differences.}
\setlength{\tabcolsep}{10pt}
\renewcommand{\arraystretch}{1.25}
\rowcolors{2}{rowgray}{white}
\begin{tabular}{l c c c}
\hline \hline
\rowcolor{gray!25} 
\textbf{Dataset} & \textbf{W} & \textbf{$p$-value} & \textbf{Result} \\
\midrule
COCO-80  & 7.0 & 0.382 & \textbf{\textcolor{accent}{tie (not significant)}} \\
VOC2007  & 0.0 & 0.148 & \textbf{\textcolor{accent}{tie (not significant)}} \\
FLAIR    & 2.0 & 0.547 & \textbf{\textcolor{accent}{tie (not significant)}} \\
\hline \hline
\end{tabular}
\label{wilcoxon_results}
\end{table}

\begin{figure*}[!h]
  \centering
  \includegraphics[width=1\linewidth]{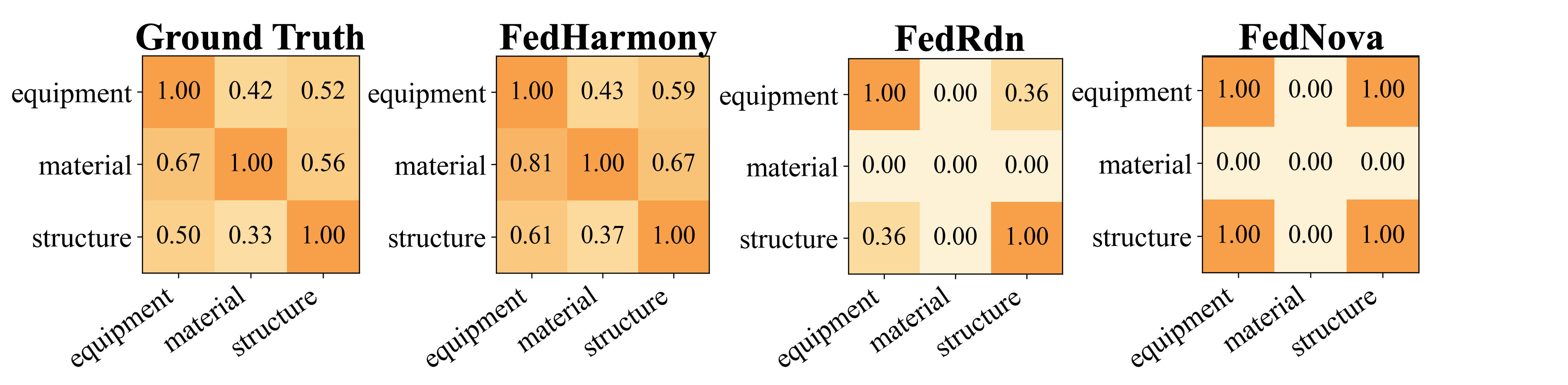}
  \caption{Qualitative comparison of learned label correlation structures on the FLAIR dataset.}
  \label{fig:flair_correlation} 
\end{figure*}

\subsection{Impact of Block-wise Optimization}
We evaluate the effect of block-wise optimization by comparing the No-Block and Block-Optimized variants on COCO-80, VOC2007, and FLAIR (Table~\ref{block}). The performance differences across all eight metrics remain minimal, typically within 0.3\%–0.5\%.
To assess statistical significance, we conduct a Wilcoxon signed-rank test using paired metric values for each dataset. As shown in Table~\ref{wilcoxon_results}, all $p$-values exceed 0.05 (COCO-80: 0.382; VOC2007: 0.148; FLAIR: 0.547), indicating no significant difference between the two configurations.
We further compare training efficiency (Table~\ref{tab:training_time}). The Block-Optimized variant substantially reduces cumulative training time, achieving a 28.3\% reduction on FLAIR (56:19 → 40:22) and 31.7\% on VOC2007 (17:09 → 11:43) by Round 10.
Overall, block-wise optimization introduces no statistically significant performance degradation while providing clear efficiency gains, confirming that the proposed clustering strategy preserves accuracy and improves computational efficiency.

\subsection{The remaining experiments}

$\textbf{Qualitative Analysis of Correlation Structure.}$
We compare the label correlation matrices learned by FedHarmony and competing methods against the ground-truth (GT) matrix. As shown in Fig.~\ref{fig:flair_correlation}, FedHarmony reconstructs a structure that closely matches the GT and accurately recovers subtle co-occurrence patterns, such as the moderate \textit{equipment–material} relation (0.43 vs.\ GT 0.42). In contrast, methods such as FedRdn and FedNova produce sparse matrices that collapse most correlations toward zero, failing to capture the underlying semantic dependencies.
These qualitative results support our central claim. Conventional aggregation strategies lose structural information and suffer from uncertain correlation aggregation, whereas FedHarmony, through its Correlation Expert and structure-driven aggregation, harmonizes heterogeneous client structures and recovers a globally faithful correlation matrix.

\begin{table}[t] 
\centering

\small

\caption{Ablation on COCO-80 and FLAIR benchmarks. \textbf{Base} = FedAvg. 
\textcolor{Acol}{\rule{6pt}{6pt}} \textbf{A} = ECL;
\textcolor{ABcol}{\rule{6pt}{6pt}} \textbf{A+B} = ECL + CAA.
(Legend text slightly shortened for space). All metrics are reported in percentage (\%).
}

\setlength{\tabcolsep}{5pt} 
\renewcommand{\arraystretch}{0.9}

\begin{tabular}{l ccc ccc} 
\toprule \toprule

\multirow{2}{*}{\makecell[l]{Metric}} &
\multicolumn{3}{c}{\textbf{COCO-80}} & 
\multicolumn{3}{c}{\textbf{FLAIR}} \\ 
\cmidrule(lr){2-4}\cmidrule(lr){5-7} 
& Base & \A{+A} & \AB{+A+B} 
& Base & \A{+A} & \AB{+A+B} \\ 

\midrule

mAP $\uparrow$  & 63.4 & \A{69.5} & \AB{71.2} & 35.4 & \A{46.4} & \AB{47.0} \\
O\_mAP $\uparrow$ & 73.9 & \A{78.5} & \AB{79.8} & 70.6 & \A{79.6} & \AB{81.9} \\
CP $\uparrow$   & 63.2 & \A{73.0} & \AB{73.8} & 34.1 & \A{44.2} & \AB{45.6} \\
CR $\uparrow$   & 54.9 & \A{54.4} & \AB{58.4} & 27.9 & \A{40.9} & \AB{39.3} \\
CF1 $\uparrow$  & 58.7 & \A{62.4} & \AB{65.2} & 30.7 & \A{42.5} & \AB{42.2} \\
OP $\uparrow$   & 69.3 & \A{79.2} & \AB{78.5} & 70.2 & \A{72.8} & \AB{74.8} \\
OR $\uparrow$   & 64.4 & \A{64.5} & \AB{67.7} & 56.0 & \A{73.3} & \AB{73.9} \\
OF1 $\uparrow$  & 66.7 & \A{71.1} & \AB{72.7} & 62.3 & \A{73.0} & \AB{74.3} \\
\toprule  \toprule 
\end{tabular}
\label{tab:ablation_cool} 
\end{table}

$\textbf{Ablation Study.}$
To verify the contribution of each component in our framework, we conduct ablations on three datasets. The primary results are summarized in Table~\ref{tab:ablation_cool} and the results for VOC2007 are deferred to the Appendix. \emph{Base} = FedAvg; \textbf{A} = expert-guided correlation loss; \textbf{B} = correlation-aware aggregation; \textbf{C} = block-wise clustering.

 One factor is added at a time (Base $\rightarrow$ A $\rightarrow$ A+B $\rightarrow$ A+B+C); all other settings are fixed.  From the Table~\ref{tab:ablation_cool}, we can get the below conclusion

\begin{figure*}[!h]
  \centering
  \includegraphics[width=1\linewidth]{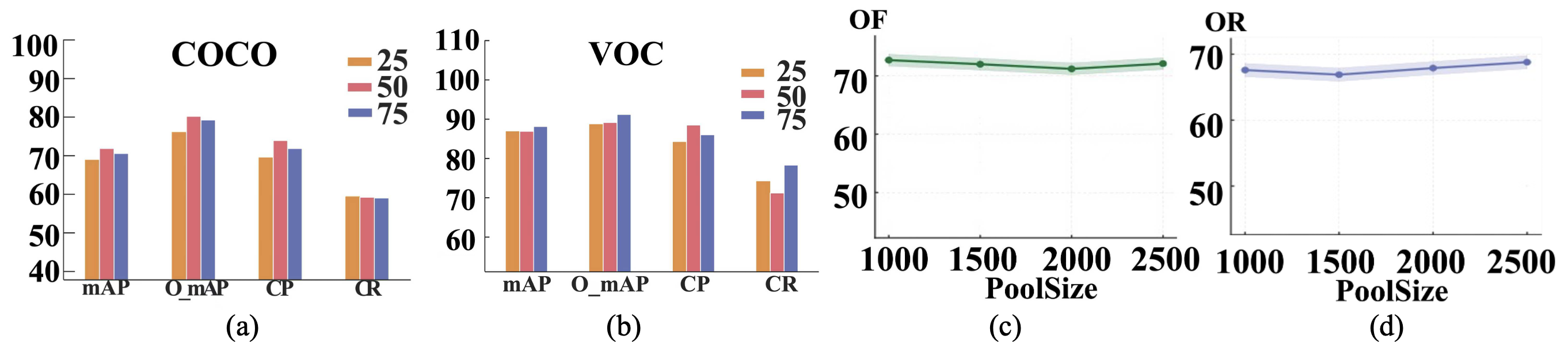}
 \caption{Performance analysis based on client configurations: (a, b) impact of the number of participating clients, and (c, d) impact of the total number of clients.}
\label{fig:combined_client_analysis} 
\end{figure*}

\begin{figure}[!h]
  \centering
  
  \includegraphics[width=0.45\textwidth]{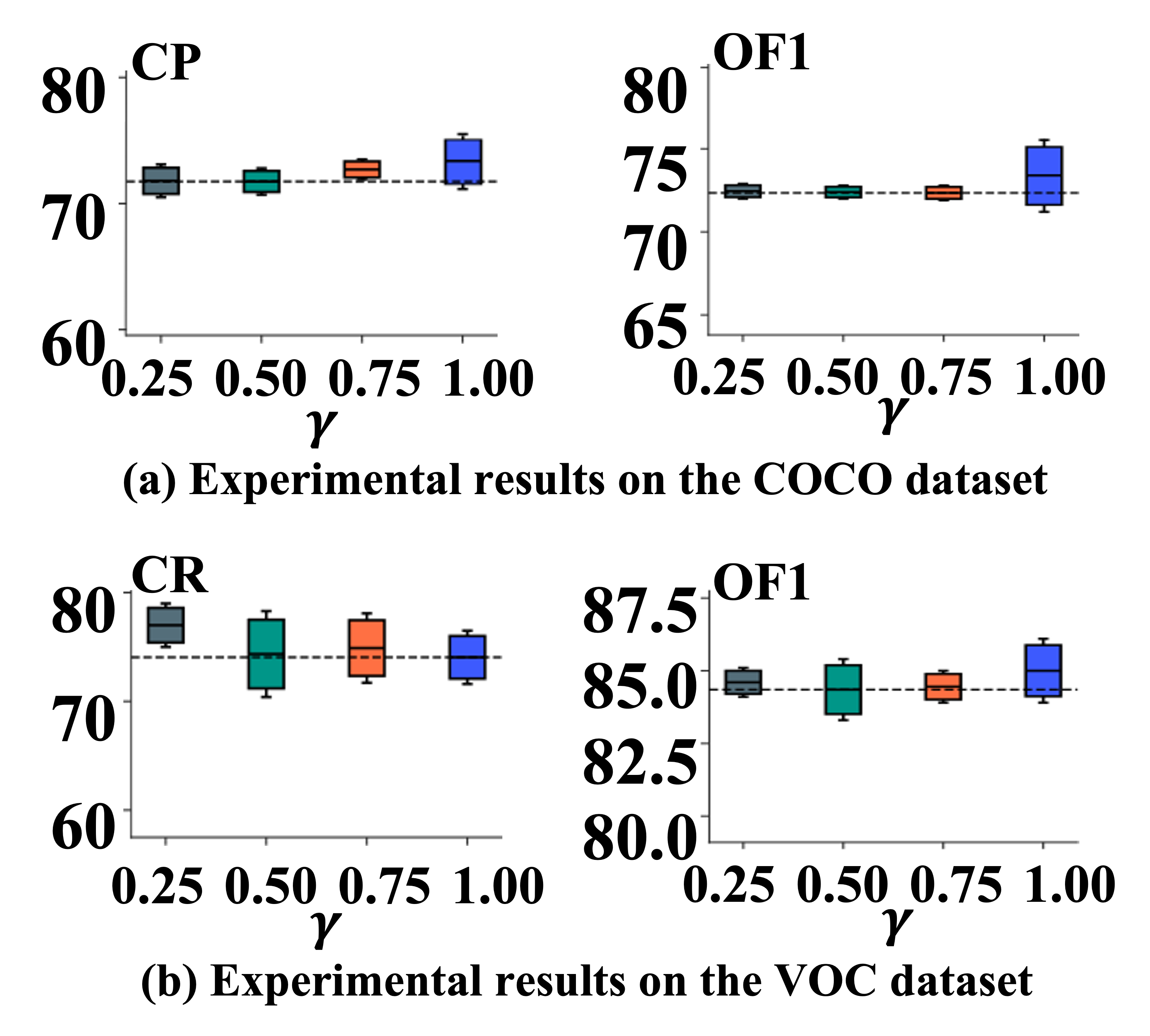}
  \caption{Robustness of FedHarmony under different label skew levels controlled by $\gamma$.}
  \label{fig:gamma_robustness}
\end{figure}
\begin{itemize}[leftmargin=1.2em,itemsep=1pt,topsep=2pt]
\item \textbf{A vs. Base.} Primary improvements on all datasets for mAP and O\_mAP; recall increases where non-IID is strongest.
\item \textbf{A+B vs. A.} Consistent gains in OR/OF1 across datasets; a minor precision drop (OP) on COCO.
\item \textbf{A+B+C vs. A+B.} Further increases in CR/OF1; effect largest on the sparsest dataset (FLAIR); slight CP/OP decrease on COCO.
\item \textbf{Overall.} A contributes most of the total mAP gain; B adds recall-oriented improvements via quality-weighted aggregation; C improves alignment conditioning. The full configuration (A+B+C) is best across metrics.
\end{itemize}
$\textbf{Robustness to  Structural Heterogeneity.}$
We assess robustness to label imbalance by varying the Dirichlet concentration parameter $\gamma$, which controls the FL-MLL sampling process. Smaller $\gamma$ produces stronger non-IID partitions, where each client observes only a small subset of labels (details in Appendix~A).
Fig. \ref{fig:gamma_robustness} shows that as $\gamma$ decreases from 1.0 to 0.25, FedHarmony maintains almost unchanged mAP, OF1, CP, and CR on COCO-80 and VOC2007, while all baselines degrade rapidly. This stability stems from aligning client-specific correlation structures toward a shared consensus rather than relying on raw sample frequencies, preventing over-represented labels from dominating aggregation.
Overall, the flat curves across different $\gamma$ values confirm that FedHarmony generalizes reliably under severe label skew and remains structurally robust across heterogeneous clients.

$\textbf{Influence of the Number of Participating Clients.}$
We evaluate the impact of client participation by testing three participation ratios (25\%, 50\%, 75\%) on COCO-80, VOC2007, and FLAIR. As shown in Fig.~\ref{fig:combined_client_analysis}(a,b), performance remains highly stable across different ratios, with variations typically within 1–2\%. This indicates that the framework maintains consistent accuracy even when only a subset of clients joins each communication round.
These results highlight the robustness of our correlation-aware aggregation and block-wise optimization, which effectively reduce performance fluctuations caused by client sampling and allow the global model to generalize reliably under heterogeneous and partially available client settings.

$\textbf{Effect of the Number of Clients.}$
We evaluate scalability by varying the number of participating clients from 1000 to 2500 on the COCO and VOC benchmarks while keeping the total training data fixed (Fig.~\ref{fig:combined_client_analysis}(c,d)).
 Across both datasets, FedHarmony exhibits highly stable performance, with Overall Recall and Overall F1 fluctuating by only 1–2 points as the federation size increases. This indicates that the method consistently harmonizes label correlations even under increasingly fragmented client distributions. The slight performance drop at extremely large scales (e.g., 2500 clients on VOC) stems from heightened local data sparsity and stronger correlation heterogeneity~\cite{clientdrift}.
Overall, the results demonstrate strong robustness to client scaling and confirm FedHarmony’s ability to maintain semantic consistency in large-scale federated multi-label learning.

\section{Conclusion}

We propose FedHarmony, which explicitly addresses local correlation drift and enforces cross-client consistency through expert-guided correlation regularization and correlation-aware aggregation. To further enhance optimization stability, we develop a lightweight block-wise acceleration scheme that improves convergence efficiency while preserving accuracy and training stability.
Extensive experiments with statistical significance validation on COCO-80, VOC2007, and FLAIR demonstrate that our method consistently achieves superior and robust performance under heterogeneous data distributions  settings.

\section*{Acknowledgement}
\label{sec:acknowledgement}

This research was supported by the Jiangsu Science Foundation (BG2024036, BK20243012), the National Science Foundation of China (62125602, U24A20324, 92464301), the New Cornerstone Science Foundation through the XPLORER PRIZE, and the Fundamental Research Funds for the Central Universities (2242025K30024 supported by the National). This work was Natural Science Foundation of China under Grant U24A20322 and Grant 62576094. This research work is also supported by the Big Data Computing Center of Southeast University.